\documentclass[conference]{IEEEtran}
\IEEEoverridecommandlockouts

\newif\ifextended
\extendedtrue
\ifdefined\buildmode
  \def\buildmodecdc{cdc}
  \ifx\buildmode\buildmodecdc \extendedfalse \fi
\fi

\usepackage{natbib}
\usepackage{amsmath,amssymb,amsfonts,amsthm}
\usepackage{algorithm}
\usepackage{algorithmic}
\usepackage{graphicx}
\usepackage{textcomp}
\usepackage{xcolor}
\usepackage{url}
\def\BibTeX{{\rm B\kern-.05em{\sc i\kern-.025em b}\kern-.08em
    T\kern-.1667em\lower.7ex\hbox{E}\kern-.125emX}}

\newtheorem{theorem}{Theorem}

\DeclareMathOperator*{\argmax}{arg\,max}
\DeclareMathOperator*{\argmin}{arg\,min}
\DeclareMathOperator{\E}{\mathbb{E}}

\begin{document}

\title{Deep Adaptive Model-Based Design of Experiments\\
\thanks{\ifextended This work has been submitted to the IEEE for possible publication. Copyright may be transferred without notice, after which this version may no longer be accessible. This is the extended version; a six-page conference paper has been submitted to the 2026 IEEE Conference on Decision and Control (CDC).\else An extended version with proofs, training details, and ablation studies is available~\citep{strouwen2026extended}.\fi{} Code: \protect\url{https://github.com/arno-papers/CDC2026}.}
}

\author{\IEEEauthorblockN{Arno Strouwen}
\IEEEauthorblockA{\textit{Strouwen Statistics}, Hasselt, Belgium \\
\textit{Biosystems Department, KU Leuven}, Belgium \\
contact@arnostrouwen.com}
\and
\IEEEauthorblockN{Sebastian Micluța-Câmpeanu}
\IEEEauthorblockA{\textit{JuliaHub}, Massachusetts, USA \\
\textit{Faculty of Physics, University of Bucharest}, Romania \\
sebastian.mc95@proton.me}
}

\maketitle

\begin{abstract}
Model-based design of experiments (MBDOE) is essential for efficient parameter estimation in nonlinear dynamical systems. However, conventional adaptive MBDOE requires costly posterior inference and design optimization between each experimental step, precluding real-time applications. We address this by combining Deep Adaptive Design (DAD), which amortizes sequential design into a neural network policy trained offline, with differentiable mechanistic models. For dynamical systems with known governing equations but uncertain parameters, we extend sequential contrastive training objectives to handle nuisance parameters and propose a transformer-based policy architecture that respects the temporal structure of dynamical systems. We demonstrate the approach on four systems of increasing complexity: a fed-batch bioreactor with Monod kinetics, a Haldane bioreactor with uncertain substrate inhibition, a two-compartment pharmacokinetic model with nuisance clearance parameters, and a DC motor for real-time deployment. On the Monod bioreactor, the adaptive policy reduces posterior RMSE by about 30\% versus the best static design; on the DC motor, it evaluates in $24~\mu$s per step versus $7.4~\mathrm{ms}$ for non-amortized adaptive design, enabling real-time deployment.
\end{abstract}

\begin{IEEEkeywords}
Model-based design of experiments, Bayesian experimental design, deep learning, parameter estimation, adaptive experimental design, nonlinear systems
\end{IEEEkeywords}

\section{Introduction}
Model-based design of experiments (MBDOE) is a cornerstone methodology for efficient parameter estimation in nonlinear dynamical systems~\citep{franceschini,walter1997}. By leveraging mechanistic knowledge of the system, MBDOE selects experimental conditions that maximize information about unknown parameters, enabling precise estimation with fewer experiments than naive approaches.

Bayesian optimal experimental design (BOED) provides a principled framework for this problem~\citep{lindley1956}, selecting designs that maximize the expected information gain (EIG). In sequential settings, adaptive strategies that use past observations to guide future decisions are particularly powerful, but conventional adaptive BOED requires costly posterior inference and design optimization between each experimental step. The EIG objective is doubly intractable~\citep{rainforth2018nesting}, creating a computational bottleneck that precludes real-time applications.

Deep Adaptive Design (DAD)~\citep{foster2021} addresses this bottleneck by amortizing sequential design into a neural network policy trained offline. iDAD~\citep{ivanova2021implicit} extended this to implicit (likelihood-free) models.
However, for the mechanistic models central to MBDOE, explicit likelihoods \emph{are} available via differentiable simulation~\citep{rackauckas}. This allows us to use the sequential Prior Contrastive Estimation (sPCE) bounds of the original DAD framework, which require explicit likelihoods, and to differentiate end-to-end through the simulator for low-variance gradient estimates.

In this paper, we combine DAD with differentiable mechanistic models to enable real-time adaptive MBDOE for dynamical systems with known governing equations and uncertain parameters.

Our contributions are:
\begin{itemize}
\item We formulate adaptive MBDOE for dynamical systems where the model structure is known, training the policy end-to-end by differentiating through the ODE solver and observation model.
\item We extend the sPCE bound to handle nuisance parameters, enabling targeted inference by marginalizing over nuisance parameters such as measurement noise.
\item We retain the transformer-based policy architecture from DAD and adapt it to dynamical systems by making temporal order explicit through positional encoding, rather than treating design-outcome pairs as permutation invariant.
\item We demonstrate the methodology on four examples: a Monod bioreactor, a Haldane bioreactor illustrating adaptation to target parameters, a pharmacokinetic model illustrating nuisance adaptation together with a brief architecture ablation, and a DC motor for real-time deployment; on the Monod bioreactor the adaptive policy reduces posterior RMSE by about 30\% relative to the best static design, and on the DC motor it evaluates in $24~\mu$s per step versus $7.4~\mathrm{ms}$ for non-amortized adaptive design.
\end{itemize}

\section{Model}
\label{sec:model}
We consider dynamical systems of the form:
\begin{equation}\label{eq:system}
\begin{aligned}
\frac{dx}{dt} &= f(t, x, \theta_T, \theta_N, u(t)), \quad x(0) = x_0(\theta_N),\\
y_k \mid x(t_k), \theta_N &\sim \mathcal{N}(g(x(t_k)),\, \Sigma(x(t_k), \theta_N)), \quad k = 1, \ldots, K,
\end{aligned}
\end{equation}
where $x(t) \in \mathbb{R}^n$ is the state vector, $u(t) \in \mathbb{R}^m$ is a controllable input vector, and $y_k \in \mathbb{R}^q$ is the observation vector at measurement time $t_k$.
We partition parameters as $\theta = (\theta_T, \theta_N)$, where $\theta_T$ are the target parameters of interest and $\theta_N$ are nuisance parameters (e.g., measurement noise scale, uncertain initial conditions).
Some components of the initial state $x_0$ may depend on $\theta_N$.
The observation noise covariance $\Sigma(x(t_k), \theta_N) \in \mathbb{R}^{q \times q}$ may depend on the state (e.g., proportional noise).
We assume a fixed experimental schedule with known measurement times $0 < t_1 < \cdots < t_K$ and piecewise-constant inputs, $u(t)=u_k$ for $t \in [t_{k-1}, t_k)$.
The design goal is to choose the input profile $u(t)$ (equivalently, the sequence $u_{1:K}$) to maximize the information gained about $\theta_T$.

We instantiate this framework on a well-mixed fed-batch bioreactor~\citep{versyck}, with state $x = (C_s, C_x, V)$, input $u = Q_{in}$, and kinetic parameters $\theta_T = (\mu_{\max}, K_s)$.
We consider a $14$-hour experiment with measurements collected every hour, so $K=14$ and $t_k = k\,\mathrm{h}$ for $k=1,\ldots,K$ (with $t_0=0\,\mathrm{h}$).
We choose a feed rate $Q_{in,k}$ that is held constant on $[(k-1)\,\mathrm{h}, k\,\mathrm{h})$.
The dynamics are:
\begin{equation}\label{eq:bioreactor}
\begin{aligned}
\frac{dC_s}{dt} &= -\frac{\mu(C_s; \theta_T)}{Y_{xs}} C_x + \frac{Q_{in}}{V}(C_{s,\text{in}} - C_s),\\
\frac{dC_x}{dt} &= \mu(C_s; \theta_T) C_x - \frac{Q_{in}}{V}C_x,\\
\frac{dV}{dt} &= Q_{in},
\end{aligned}
\end{equation}
where $C_s$ is substrate concentration, $C_x$ is biomass concentration, $V$ is reactor volume, $Q_{in}$ is the controllable feed rate, and $Y_{xs} = 0.777$ and $C_{s,\text{in}} = 50$ are the known yield coefficient and feed substrate concentration, respectively.

The specific growth rate $\mu$ follows Monod kinetics:
\begin{equation}
\mu(C_s; \theta_T) = \frac{\mu_{\max} C_s}{K_s + C_s},
\end{equation}
where $\mu_{\max}$ and $K_s$ are unknown, with independent uniform priors $\mu_{\max} \in [0.3, 0.5]$ and $K_s \in [0.3, 0.6]$.
We measure substrate concentration (so $q=1$ and $g(x)=C_s$), with observations $y_k \sim \mathcal{N}(C_s(t_k),\, \sigma^2)$.
The initial substrate concentration and volume are known: $C_{s,0} = 3.0$ and $V_0 = 7.0$.
The nuisance parameters are $\theta_N = (\sigma, C_{x,0})$: the initial biomass $C_{x,0} \sim \text{Uniform}(0.10, 0.50)$ varies between batches due to different inocula, and the noise scale $\sigma \sim \text{Uniform}(0.05, 0.15)$ is unknown but constant across measurements. Both are treated as unknown and marginalized during training.
The goal is to design a sequence of feed rates $Q_{in,1}, \ldots, Q_{in,K}$ that maximizes information about $\mu_{\max}$ and $K_s$ from noisy $C_s$ measurements.

\section{Information-Theoretic Criterion}
\label{sec:criterion}

In Bayesian experimental design, we begin with a prior $p(\theta)$ encoding our uncertainty about parameters $\theta = (\theta_T, \theta_N)$ and seek an input $u$ such that the posterior on the target parameters $p(\theta_T|y,u)$ is as concentrated as possible. The standard measure of concentration is entropy:
\begin{equation}
H(\theta_T|y,u) = -\int p(\theta_T|y,u) \log p(\theta_T|y,u)\,d\theta_T.
\end{equation}
Since the outcome $y$ is unknown at the design stage, we average over all possible observations weighted by their marginal likelihood $p(y|u) = \int p(y|\theta,u)\,p(\theta)\,d\theta$:
\begin{equation}\label{eq:bed_entropy}
u^\star = \argmin_{u} \int H(\theta_T|y,u)\,p(y|u)\,dy.
\end{equation}
This is equivalent to maximizing the targeted expected information gain (EIG), the mutual information between target parameters and observations~\citep{lindley1956}:
\begin{equation}\label{eq:eig}
\mathcal{I}(u) = I(\theta_T; y|u) = H(\theta_T) - \E_{p(y|u)}[H(\theta_T|y,u)].
\end{equation}
The EIG is intractable to compute in general because $p(y|u)$ requires integrating over the full parameter space. Similarly, obtaining the posterior $p(\theta_T|y,u)$ requires the partial marginal likelihood $p(y|\theta_T,u) = \int p(y|\theta_T, \theta_N, u)\,p(\theta_N|\theta_T)\,d\theta_N$, which integrates over the nuisance parameters.

The formulation above is a \emph{static} design: all inputs $u_{1:K}$ are chosen upfront by maximizing the EIG over the full trajectory.
In sequential experiments, however, the design should adapt to past observations.
We therefore optimize over \emph{policies}---decision rules $\pi_k(h_{k-1})$ that map the experimental history $h_{k-1} = ((u_1, y_1), \ldots, (u_{k-1}, y_{k-1}))$ to the next input $u_k$.
This policy formulation is equivalent to the usual nested adaptive criterion.\ifextended{} See Appendix~\ref{sec:policy_equiv} for a formal proof. \else{} See the extended version~\citep{strouwen2026extended} for a formal proof. \fi
Once an optimal policy is available, online deployment requires only evaluating the decision rule at each step, avoiding repeated optimization during the experiment. For continuous observation spaces, tabular dynamic programming is infeasible; instead, the policy is parameterized by a neural network and trained offline via Monte Carlo simulation, as we do here.

Let $\theta = (\theta_T, \theta_N)$ denote all unknown parameters, $h_k = ((u_1, y_1), \ldots, (u_k, y_k))$ the experimental history, and $u_k = \pi(h_{k-1})$ the policy.
The targeted EIG from the full sequential experiment is:
\begin{equation}
\mathcal{I}_K^{\text{tgt}}(\pi) = \E_{p(\theta)p(h_K|\theta,\pi)}\left[\log \frac{p(h_K|\theta_T,\pi)}{p(h_K|\pi)}\right] = I(\theta_T; h_K | \pi),
\end{equation}
where $p(h_K|\theta_T, \pi) = \int p(h_K|\theta, \pi)\, p(\theta_N|\theta_T) \, d\theta_N$ and $p(h_K|\pi) = \int p(h_K|\theta,\pi)\,p(\theta)\,d\theta$.

Direct optimization of $\mathcal{I}_K^{\text{tgt}}(\pi)$ is intractable due to these marginals. \citet{foster2021} introduced sequential Prior Contrastive Estimation (sPCE), which uses contrastive samples to obtain a tractable lower bound on the sequential EIG. We extend their framework to handle nuisance parameters, yielding a bound on the targeted EIG. Intuitively, the objective rewards policies whose induced histories make the true target parameters easier to distinguish from contrastive alternatives after marginalizing over nuisance uncertainty.

\begin{theorem}[Targeted sPCE]
\label{thm:targeted_spce_main}
Let $\theta^{(0)} = (\theta_T^{(0)}, \theta_N^{(0)}) \sim p(\theta)$ generate $h_K$, let $\theta^{(1)},\ldots,\theta^{(L)} \sim p(\theta)$ be contrastive samples, and let $\tilde{\theta}_N^{(1:M)} \sim p(\theta_N|\theta_T^{(0)})$ be nuisance samples. The objective
\begin{equation}
\hat{\mathcal{L}}_K^{\text{tgt}}(\pi, L, M) = \E\left[\log \frac{\frac{1}{M}\sum_{m=1}^{M} p(h_K|\theta_T^{(0)}, \tilde{\theta}_N^{(m)}, \pi)}{\frac{1}{L+1}\sum_{\ell=0}^{L} p(h_K|\theta^{(\ell)}, \pi)}\right]
\end{equation}
satisfies $\hat{\mathcal{L}}_K^{\text{tgt}} \leq \mathcal{I}_K^{\text{tgt}}$, converging to $\mathcal{I}_K^{\text{tgt}}$ as $L, M \to \infty$.
\ifextended See Appendix~\ref{sec:nuisance} for proof.\else{} See the extended version~\citep{strouwen2026extended} for proof.\fi
\end{theorem}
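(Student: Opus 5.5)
The plan is to split the log-ratio inside $\hat{\mathcal{L}}_K^{\text{tgt}}$ into three pieces, each of which is either an exact information quantity or a known bound. Write $D_L(h_K) = \frac{1}{L+1}\sum_{\ell=0}^{L} p(h_K|\theta^{(\ell)},\pi)$ and $\hat{N}_M(h_K) = \frac{1}{M}\sum_{m=1}^{M} p(h_K|\theta_T^{(0)},\tilde{\theta}_N^{(m)},\pi)$. Inserting the exact partial marginal $p(h_K|\theta_T^{(0)},\pi)$ and the full likelihood $p(h_K|\theta^{(0)},\pi)$ gives
\begin{equation*}
\log\frac{\hat N_M}{D_L} = \log\frac{\hat N_M}{p(h_K|\theta_T^{(0)},\pi)} + \log\frac{p(h_K|\theta_T^{(0)},\pi)}{p(h_K|\theta^{(0)},\pi)} + \log\frac{p(h_K|\theta^{(0)},\pi)}{D_L}.
\end{equation*}

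\emph{First term.} Condition on $(\theta_T^{(0)},h_K)$. The nuisance samples $\tilde\theta_N^{(m)}$ are drawn from $p(\theta_N|\theta_T^{(0)})$ independently of $h_K$, so $\E[\hat N_M \mid \theta_T^{(0)},h_K] = p(h_K|\theta_T^{(0)},\pi)$. Jensen's inequality (concavity of $\log$) then makes the conditional expectation of the first term $\le 0$.

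\emph{Second term.} Its expectation under $p(\theta)p(h_K|\theta,\pi)$ equals $-I(\theta_N;h_K\mid\theta_T,\pi)$.

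\emph{Third term.} This is exactly the original sPCE integrand of \citet{foster2021}, with $\theta^{(0)}$ included in the denominator. Its expectation is therefore $\le I(\theta;h_K|\pi)$. If needed I would re-derive this bound in the standard way: the expectation equals $I(\theta;h_K|\pi)$ minus a KL divergence between the joint law of $(h_K,\theta^{(0:L)})$ and the symmetrized proposal $D_L(h_K)\prod_\ell p(\theta^{(\ell)})$, and that KL is nonnegative. This step has to hold for the adaptive policy, but Foster et al.\ prove it in exactly that setting.

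\emph{Combining.} Adding the three pieces and applying the chain rule $I(\theta;h_K|\pi) = I(\theta_T;h_K|\pi) + I(\theta_N;h_K\mid\theta_T,\pi)$ gives $\hat{\mathcal{L}}_K^{\text{tgt}} \le I(\theta_T;h_K|\pi) = \mathcal{I}_K^{\text{tgt}}$.

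\emph{Convergence.} I expect this to be the main technical obstacle, because limits must be interchanged with expectations of logarithms.
\begin{itemize}
\item For $M\to\infty$: the strong law gives $\hat N_M \to p(h_K|\theta_T^{(0)},\pi)$ almost surely, so the first term tends to $0$ pointwise.
\item For $L\to\infty$: the strong law gives $D_L \to p(h_K|\pi)$ almost surely. The third term then tends pointwise to $\log p(h_K|\theta^{(0)},\pi)/p(h_K|\pi)$, whose expectation is $I(\theta;h_K|\pi)$. Hence the total converges to $\mathcal{I}_K^{\text{tgt}}$.
\item To pass the limits inside the expectation I would use dominated convergence. In the settings considered here $\sigma$ is bounded below (e.g.\ $\sigma\ge 0.05$), so the Gaussian likelihoods are uniformly bounded above. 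This gives an upper bound on $\log\hat N_M$ and $\log D_L$.
\item Lower bounds come from $D_L \ge \frac{1}{L+1}p(h_K|\theta^{(0)},\pi)$ and from $\E\log\hat N_M \ge \E\log p(h_K|\theta_T^{(0)},\tilde\theta_N^{(1)},\pi)$ (concavity). Both right-hand sides are integrable given finite second moments of the Gaussian log-likelihoods.
\item The $L$-limit is monotone, as in \citet{foster2021}, which helps.
\end{itemize}
I would first state the integrability assumption explicitly, namely $\E|\log p(h_K|\theta,\pi)|<\infty$, and then invoke uniform integrability.
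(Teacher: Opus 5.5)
Your derivation of $\hat{\mathcal{L}}_K^{\text{tgt}}\le\mathcal{I}_K^{\text{tgt}}$ is correct and uses the same two ingredients as the paper. The first is conditional Jensen for the nuisance average, via $\E[\hat N_M\mid\theta_T^{(0)},h_K]=p(h_K|\theta_T^{(0)},\pi)$. The second is the sPCE bound of \citet{foster2021} for the denominator.

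The only difference is bookkeeping. The paper subtracts $\hat{\mathcal{L}}_K^{\text{tgt}}$ directly from $\E[\log p(h_K|\theta_T^{(0)},\pi)/p(h_K|\pi)]$. Its second term is then the sPCE gap $\E[\log D_L/p(h_K|\pi)]$, and neither $I(\theta;h_K|\pi)$ nor the chain rule appears. Your detour through $p(h_K|\theta^{(0)},\pi)$ reaches the same quantity, but it additionally needs $I(\theta;h_K|\pi)<\infty$ to split the expectation.

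One correction to your fallback re-derivation. Take $q=p(h_K|\theta^{(0)},\pi)\prod_\ell p(\theta^{(\ell)})$ and $\bar q=D_L(h_K)\prod_\ell p(\theta^{(\ell)})$. Then $\mathrm{KL}(q\,\|\,\bar q)=\E_q[\log p(h_K|\theta^{(0)},\pi)/D_L]$, which is the sPCE objective itself, not the gap. The gap is $\E_q[\log D_L/p(h_K|\pi)]$. Its integrand is symmetric in $\theta^{(0:L)}$, so its expectation is the same under $\bar q$. That gives $\mathrm{KL}(\bar q\,\|\,p(h_K|\pi)\prod_\ell p(\theta^{(\ell)}))\ge 0$, which is the paper's expression.

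For convergence you take a genuinely different route from the paper. The paper assumes two-sided uniform bounds on $p(h_K|\theta_T,\theta_N,\pi)/p(h_K|\theta_T,\pi)$ and on $p(h_K|\theta,\pi)/p(h_K|\pi)$, then applies bounded convergence. Your domination step fails as written, for two reasons:
\begin{itemize}
\item $\log D_L\ge-\log(L+1)+\log p(h_K|\theta^{(0)},\pi)$ is not uniform in $L$.
\item $\E\log\hat N_M\ge\E\log p(h_K|\theta_T^{(0)},\tilde\theta_N^{(1)},\pi)$ compares expectations; it is not a pointwise dominating function.
\end{itemize}

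The repair differs for the two terms.

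For $L$, you need no lower dominator. The sPCE bound already gives $\E\log D_L\ge\E\log p(h_K|\pi)$ for every $L$. Fatou applied to $c-\log D_L\ge 0$, with $c$ the likelihood ceiling from the noise floor, gives $\limsup_L\E\log D_L\le\E\log p(h_K|\pi)$.

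For $M$, Fatou only reproduces Jensen's direction, so you need uniform integrability of $(\log\hat N_M)_-$. This holds because $x\mapsto G((\log x)_-)$ is convex for every convex nondecreasing $G$. Hence $\E\,G((\log\hat N_M)_-)\le\E\,G((\log p(h_K|\theta_T^{(0)},\tilde\theta_N^{(1)},\pi))_-)$ uniformly in $M$. Take $G(t)=t^2$ under your second-moment assumption, or a de la Vall\'ee-Poussin $G$ under mere integrability, and conclude by Vitali. The integrability you need is that of this nuisance-mismatched log-likelihood, not only of $\log p(h_K|\theta,\pi)$ under the matched law.

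Once repaired, your route has a real advantage. The paper's uniform ratio bounds fail for Gaussian likelihoods whose mean or noise scale varies with $\theta$, because the ratio tends to $0$ or $\infty$ in the tails of $y$. Your assumptions, a noise floor and bounded states, actually hold in the paper's examples. The paper's route gives a one-line argument, but only under that restrictive assumption.
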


\section{Optimal Policy}
We parameterize the sequential design policy as a neural network $u_k = \pi_\phi(h_{k-1})$ that maps the experimental history to the next input.
Here $\phi$ are the trainable policy parameters.
The policy is trained offline; at deployment, computing $u_k$ requires only a forward pass.

\subsection{Training}
Training maximizes the targeted contrastive objective $\hat{\mathcal{L}}_K^{\text{tgt}}$ by stochastic gradient ascent with Adam on $\phi$. Since observations depend on $\phi$ through the policy, the expectation is over $\phi$-dependent random variables. For Gaussian observation models, we apply the reparameterization trick: writing $y_k = \mu_k + \Sigma_k^{1/2}\varepsilon_k$ with $\varepsilon_k \sim \mathcal{N}(0, I)$ independent of $\phi$ allows the gradient to pass inside the expectation:
\begin{equation}
\begin{aligned}
\frac{d\hat{\mathcal{L}}_K^{\text{tgt}}}{d\phi}
&= \E\Bigg[\frac{d}{d\phi}\Bigg(
\log\Big(\frac{1}{M}\sum_{m=1}^{M}
p(h_K|\theta_T^{(0)}, \tilde{\theta}_N^{(m)}, \pi_\phi)\Big)\\
&\qquad - \log\Big(\frac{1}{L+1}\sum_{\ell=0}^{L}
p(h_K|\theta^{(\ell)}, \pi_\phi)\Big)
\Bigg)\Bigg],
\end{aligned}
\end{equation}
where the expectation is approximated with a minibatch of $B$ independent episodes (Algorithm~\ref{algo:DAD}).
Each episode draws a ground-truth sample $\theta^{(0)} \sim p(\theta)$ that generates the trajectory, $L$ contrastive samples $\theta^{(1:L)} \sim p(\theta)$ for the denominator, and $M$ nuisance samples $\tilde{\theta}_N^{(1:M)} \sim p(\theta_N|\theta_T^{(0)})$ for the numerator, all sharing the target parameters $\theta_T^{(0)}$ of the ground truth. This requires $L + M + 2$ ODE solves per episode: one rollout to generate $h_K$, $M$ nuisance-conditioned likelihood evaluations for the numerator, and $L + 1$ likelihood evaluations for the denominator.

\begin{algorithm}[t]
\caption{Targeted Deep Adaptive Design for MBDOE}
\label{algo:DAD}
\begin{algorithmic}[1]
\REQUIRE Prior $p(\theta_T, \theta_N)$, dynamics $f$, output $g$, steps $K$, minibatch size $B$
\ENSURE Trained policy network $\pi_\phi$
\WHILE{training budget not exceeded}
    \STATE \textit{(All likelihood evaluations below are conditioned on the current policy $\pi_\phi$.)}
    \STATE Sample $\{\theta^{(\ell,b)}\}_{\ell=0:L,\,b=1:B} \overset{\text{i.i.d.}}{\sim} p(\theta)$
    \STATE Sample $\{\tilde{\theta}_N^{(m,b)}\}_{m=1:M,\,b=1:B} \sim p(\theta_N|\theta_T^{(0,b)})$
    \FOR{$b = 1, \ldots, B$}
        \STATE Set $x_0^{(b)}$ from $\theta_N^{(0,b)}$, history $h_0^{(b)} = \varnothing$
        \FOR{$k = 1, \ldots, K$}
            \STATE Compute input $u_k^{(b)} = \pi_\phi(h_{k-1}^{(b)})$
            \STATE Simulate $x_k^{(b)}$ from \eqref{eq:system} with $\theta^{(0,b)}$ and input $u_k^{(b)}$
            \STATE Sample $y_k^{(b)} \sim \mathcal{N}(g(x_k^{(b)}),\, \Sigma(x_k^{(b)}, \theta_N^{(0,b)}))$
            \STATE Update history $h_k^{(b)} = (h_{k-1}^{(b)}, (u_k^{(b)}, y_k^{(b)}))$
        \ENDFOR
        \STATE Compute $\widehat p_M^{(b)} = \frac{1}{M}\sum_{m=1}^{M} p(h_K^{(b)}|\theta_T^{(0,b)}, \tilde{\theta}_N^{(m,b)})$
        \STATE Compute $\widehat p_L^{(b)} = \frac{1}{L+1}\sum_{\ell=0}^{L} p(h_K^{(b)}|\theta^{(\ell,b)})$
        \STATE Set $r^{(b)} = \log \widehat p_M^{(b)} - \log \widehat p_L^{(b)}$
    \ENDFOR
    \STATE Compute gradient $\nabla_\phi \frac{1}{B}\sum_{b=1}^{B} r^{(b)}$ via autodiff
    \STATE Update $\phi$ via Adam
\ENDWHILE
\end{algorithmic}
\end{algorithm}

\subsection{Choosing $L$, $M$, and $B$}
The targeted sPCE objective depends on the number of contrastive samples $L$ and nuisance samples $M$, while the optimizer uses a minibatch size $B$ to form stochastic gradient estimates.
These hyperparameters trade off bound tightness, estimator variance, and computational cost.
\ifextended See Appendix~\ref{sec:budget_stub} for a brief discussion and asymptotic guidance under a fixed compute budget.\else{} See the extended version~\citep{strouwen2026extended} for a brief discussion and asymptotic guidance under a fixed compute budget.\fi

\subsection{Network Architecture}
Following DAD~\citep{foster2021}, we use a transformer-based policy architecture. For dynamical systems, however, the key additional requirement is to represent temporal order explicitly, since earlier inputs affect later states and therefore later observations.

We therefore augment the history with sinusoidal positional encodings and use causal attention to map $((u_1, y_1), \ldots, (u_{k-1}, y_{k-1}))$ to the next input:
\begin{equation}
u_k = \pi_\phi(h_{k-1}) = F_{\phi_2}(\text{Transformer}_{\phi_1}(h_{k-1})),
\end{equation}
where $\text{Transformer}_{\phi_1}$ processes the sequence and $F_{\phi_2}$ is a final projection head.

\ifextended All examples share the same architecture and optimizer; full hyperparameters are given in Appendix~\ref{sec:training_details}.\else All examples share the same architecture and optimizer; full hyperparameters are given in the extended version~\citep{strouwen2026extended}.\fi

\section{Results}

\begin{figure*}[t]
\centering
\includegraphics[width=\textwidth]{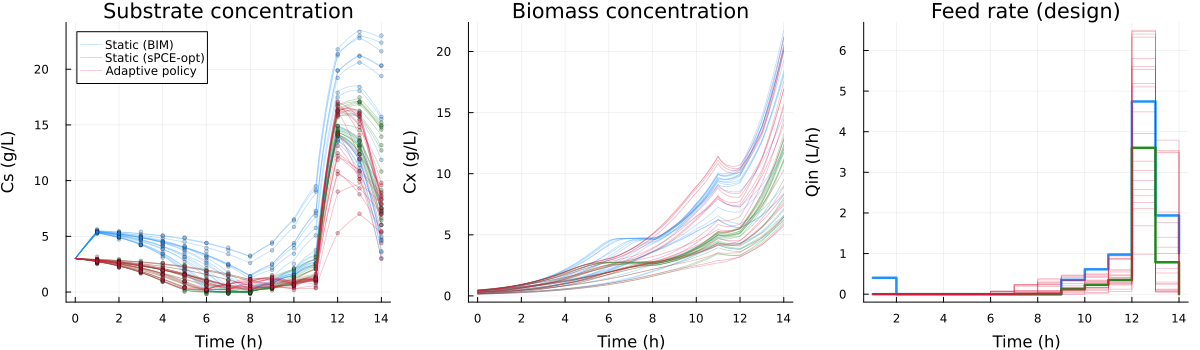}
\caption{Design comparison for the Monod bioreactor (20 rollouts per strategy, shared prior samples). Left to right: substrate $C_s$, biomass $C_x$, and feed rate $Q_{in}$ over 14~hours. The adaptive policy varies its design per rollout, keeping the feed near zero during initial growth before ramping up. All three strategies concentrate feeding towards the end; the adaptive policy adapts the timing and magnitude to each realization.}
\label{fig:dynamics}
\end{figure*}

\begin{table}[t]
\centering
\caption{Monod bioreactor: targeted sPCE scores (mean $\pm$ SEM, $n=1000$; higher is better) and posterior RMSE ($\times 10^3$, $n=5000$; lower is better).}
\label{tab:spce}
\begin{tabular}{@{}lccc@{}}
\hline
Design strategy & sPCE score & \multicolumn{2}{c}{RMSE ($\times 10^3$)} \\
 & & $\mu_{\max}$ & $K_s$ \\
\hline
Adaptive policy & $\mathbf{3.68 \pm 0.04}$ & $\mathbf{3.1}$ & $\mathbf{24.9}$ \\
Static (sPCE-opt) & $3.34 \pm 0.04$ & $5.6$ & $41.5$ \\
Static (BIM) & $3.45 \pm 0.04$ & $4.4$ & $34.3$ \\
\hline

\end{tabular}
\end{table}

\begin{figure}[t]
\centering
\includegraphics[width=\columnwidth]{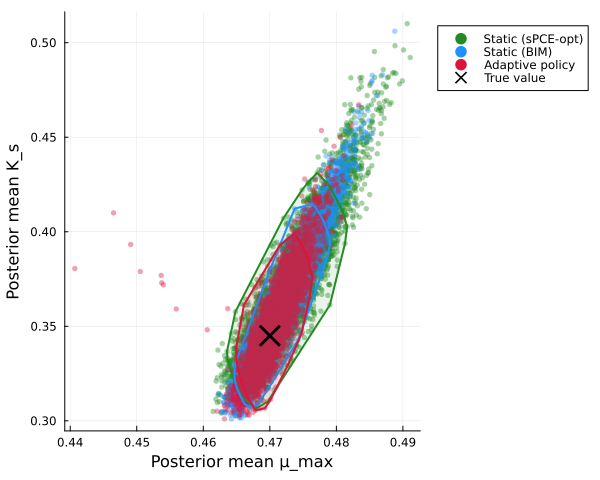}
\caption{Posterior mean estimates ($\hat{\mu}_{\max}$, $\hat{K}_s$) across 5000 trials for three design strategies. Convex hulls (90\% closest to median) indicate the spread of posterior means; the adaptive policy yields the most concentrated estimates around the true values.}
\label{fig:posterior}
\end{figure}

\begin{figure*}[t]
\centering
\includegraphics[width=\textwidth]{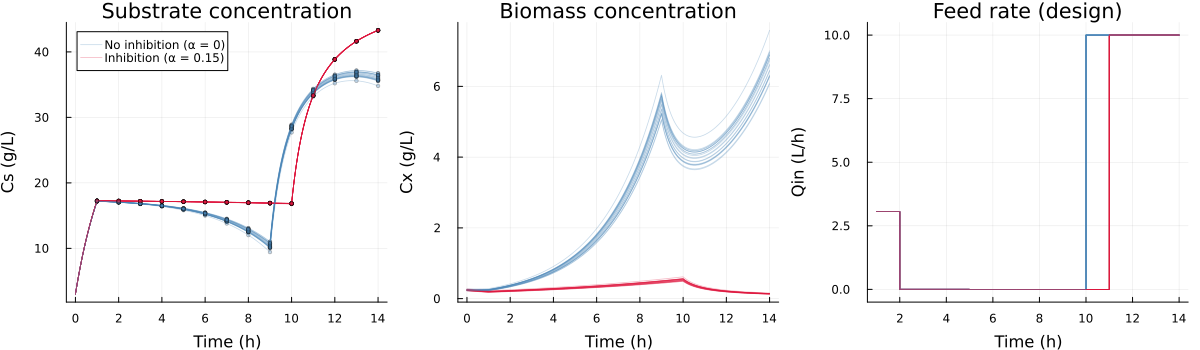}
\caption{Haldane bioreactor rollouts (20 per scenario, shared nuisance samples). Blue: no inhibition ($\alpha = 0$); red: strong inhibition ($\alpha = 0.15$). The policy first applies the same probing feed pulse in both cases, pushing the reactor into a regime where inhibition can become visible. It then adapts to the true target value: without inhibition the policy resumes feeding earlier to avoid returning to a low-substrate, non-inhibitory regime, whereas under strong inhibition the final ramp to maximal feed is delayed.}
\label{fig:haldane}
\end{figure*}

\paragraph{Monod bioreactor.}
Fig.~\ref{fig:dynamics} shows representative rollouts: the adaptive policy varies its feed profile per rollout in response to observations, while the static designs apply a fixed sequence regardless of the outcome.
Table~\ref{tab:spce} compares the adaptive policy against two static baselines---a classical Bayesian D-optimal design (BIM\ifextended{}, detailed in Appendix~\ref{sec:bim}\else{}; see the extended version~\citep{strouwen2026extended}\fi) and a static design optimized with the same sPCE objective. The adaptive policy significantly outperforms both on sPCE score ($p < 0.001$, paired $t$-tests) and reduces posterior RMSE by approximately 30\% relative to the best static design.
Fig.~\ref{fig:posterior} confirms that the adaptive policy yields the most concentrated posterior estimates around the true parameter values.
Only 10/5000 adaptive trials (the outliers in Fig.~\ref{fig:posterior}) yield poor posteriors. In these cases, rare misleading early observations cause the policy to stop pumping, driving substrate to zero. These are training failures: with a larger training budget, the policy learns to avoid them. All reported statistics include these trials.

\paragraph{Haldane substrate inhibition.}
This example shares the Monod bioreactor setup (same states, feed rate design $Q_{in}$, substrate observation $C_s$) but replaces the growth rate with Haldane kinetics: $\mu = \mu_{\max} C_s / (K_s + C_s + C_s^2/K_i)$, adding substrate inhibition. We set $\alpha = 1/K_i$ as the sole target with a uniform prior $\alpha \sim \text{Uniform}(0, 0.15)$, and treat $\mu_{\max}$, $K_s$, $\sigma$, $C_{x,0}$ as tightly bounded nuisance parameters (e.g., $\mu_{\max} \in [0.39, 0.41]$, $K_s \in [0.44, 0.46]$), so the sPCE objective focuses on discriminating whether inhibition is present ($\alpha = 0$ recovers Monod).
Unlike the Monod example, where the policy adapts to nuisance variation, here it must adapt to the \emph{target parameter} itself.
Fig.~\ref{fig:haldane} shows this qualitatively. The common initial feed pulse is itself informative: it pushes the reactor into the substrate range where inhibition could become active, so the ensuing response becomes highly diagnostic of the true value of $\alpha$. The policy then adapts to that inferred target value. Without inhibition, biomass grows rapidly and would otherwise drive the system back into a low-substrate, non-inhibitory regime that is less informative about $\alpha$, so the policy resumes feeding earlier. Under strong inhibition, growth remains suppressed and substrate stays elevated for longer, so the final ramp to maximal feed is delayed. In this way, the timing of the later feed switch becomes a direct consequence of the inferred inhibition strength.

\paragraph{Pharmacokinetic model.}
A two-compartment PK model with transit absorption describes drug distribution between a central compartment (volume $V_C$) and a peripheral compartment (volume $V_P$), fed by a three-stage transit chain:
\begin{equation}\label{eq:pk}
\begin{aligned}
\dot{A}_{t_1} &= R_\text{inf} - k_{tr} A_{t_1}, \\
\dot{A}_{t_2} &= k_{tr} A_{t_1} - k_{tr} A_{t_2}, \\
\dot{A}_{t_3} &= k_{tr} A_{t_2} - k_a A_{t_3}, \\
\dot{A}_c &= k_a A_{t_3} - \tfrac{CL + Q_d}{V_C} A_c + \tfrac{Q_d}{V_P} A_p, \\
\dot{A}_p &= \tfrac{Q_d}{V_C} A_c - \tfrac{Q_d}{V_P} A_p,
\end{aligned}
\end{equation}
where $R_\text{inf}$ is the controllable infusion rate (design input). The target parameters are the absorption rates $\theta_T = (k_a, k_{tr})$, while $\theta_N = (CL, Q_d, \sigma_{\text{prop}}, \sigma_{\text{add}})$ are nuisance, including patient-specific clearance. The observation is central concentration $C_c = A_c/V_C$ with combined proportional and additive noise: $y_k \sim \mathcal{N}(C_c, (\sigma_{\text{prop}} C_c)^2 + \sigma_{\text{add}}^2)$.
We simulate a $24$-hour experiment with hourly observations ($K = 24$), using uniform priors $k_a, k_{tr} \in [0.5, 3.0]$, $CL \in [1.0, 5.0]$, $Q_d \in [0.5, 3.0]$, $\sigma_{\text{prop}} \in [0.05, 0.20]$, and $\sigma_{\text{add}} \in [0.01, 0.10]$.
Fig.~\ref{fig:pharma} provides a qualitative illustration that nuisance parameters can materially change the optimal dosing schedule. After a common initial infusion pulse, the policy resumes dosing earlier under fast elimination and later under slow elimination, reflecting how quickly the drug is cleared. This nuisance adaptation emerges from the targeted sPCE objective, which marginalizes over $(CL, Q_d)$ during training, so the policy learns that the dosing schedule most informative for probing absorption depends on the elimination profile.
A brief architecture ablation on this example suggests that explicit temporal encoding is the main architectural ingredient: removing positional encoding drops sPCE by $0.35$~nats ($t{=}12.1$), while replacing attention with a comparably-sized MLP causes a smaller drop of $0.03$~nats ($t{=}2.2$)\ifextended{} (Appendix~\ref{sec:ablation})\else{} (see the extended version~\citep{strouwen2026extended})\fi.

\begin{figure*}[t]
\centering
\includegraphics[width=\textwidth]{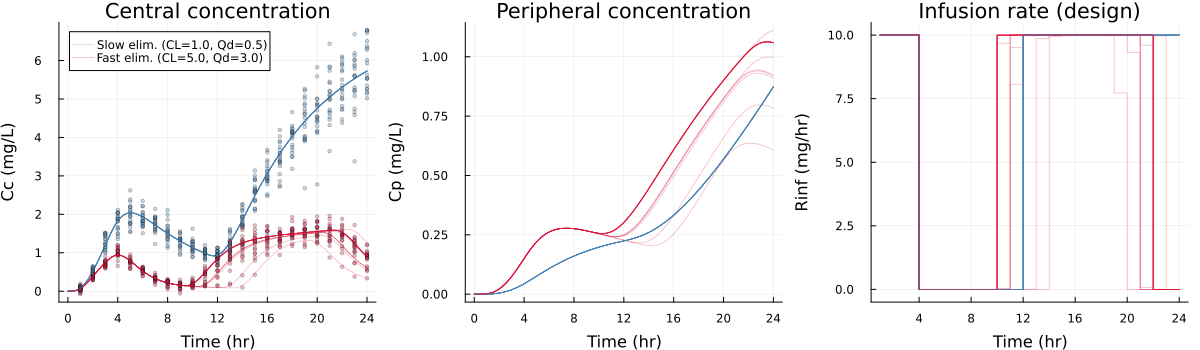}
\caption{PK model rollouts (20 per scenario, shared absorption parameters $k_a$, $k_{tr}$ at midrange). Blue: slow elimination ($CL = 1.0$~L/hr, $Q_d = 0.5$~L/hr); red: fast elimination ($CL = 5.0$~L/hr, $Q_d = 3.0$~L/hr). Left to right: central concentration $C_c = A_c/V_C$, peripheral concentration $C_p = A_p/V_P$, and infusion rate $R_\text{inf}$ (design). After a common initial infusion pulse, the policy resumes dosing earlier under fast elimination, whereas under slow elimination it delays redosing because the drug persists longer.}
\label{fig:pharma}
\end{figure*}

\paragraph{DC motor.}
A DC motor with state $(\omega, i)$ (angular velocity, current) follows
\begin{equation}\label{eq:motor}
L_a \frac{di}{dt} = V_{in} - Ri - k\omega, \quad J \frac{d\omega}{dt} = ki - f\omega,
\end{equation}
with known resistance $R = 0.5\,\Omega$ and inductance $L_a = 4.5\,$mH.
The design input is voltage $V_{in} \in [0, 10]\,$V and the observation is angular velocity $\omega$ with additive Gaussian noise.
The experiment lasts $100\,$ms ($K = 10$ steps at $\Delta t = 10\,$ms), with target parameters $\theta_T = (k, J)$ (motor constant, inertia), nuisance $\theta_N = (f, \sigma)$ (friction, noise), and uniform priors $k \in [0.3, 0.7]$, $J \in [0.01, 0.04]$, $f \in [0.005, 0.02]$, $\sigma \in [0.5, 2.0]$.
We compare the amortized adaptive policy against an adaptive Bayesian D-optimal baseline that re-optimizes the design at each step using the current posterior mode (adaptive BIM)\ifextended{} (Appendix~\ref{sec:bim})\else{} (see the extended version~\citep{strouwen2026extended})\fi.
Table~\ref{tab:motor} shows that the amortized policy outperforms adaptive BIM on both sPCE score and posterior RMSE. This is expected because the learned policy is trained against the full sequential objective, whereas adaptive BIM re-optimizes a myopic surrogate based on the current posterior mode. The amortized policy also evaluates orders of magnitude faster: a single forward pass takes a median of $24\,\mu$s versus $7.4\,$ms for online re-optimization.
Fig.~\ref{fig:timing} confirms that even the 99.9th percentile ($79\,\mu$s) remains $126\times$ below the $10\,$ms sampling interval, making real-time deployment feasible.
Note that this adaptive BIM baseline is already a deliberately simple strategy---myopic, with a Dirac posterior approximation and a 1D grid search---yet it already fails to run within the sampling interval by the later steps, as each FIM evaluation must differentiate through the growing history of design points.

\begin{table}[t]
\centering
\caption{DC motor: adaptive sPCE policy vs.\ adaptive BIM (online re-optimization). sPCE scores (mean $\pm$ SEM, $n=500$; higher is better), posterior RMSE ($\times 10^3$, $n=500$; lower is better), and per-step wall time.}
\label{tab:motor}
\small
\begin{tabular}{@{}lcccc@{}}
\hline
 & sPCE & \multicolumn{2}{c}{RMSE ($\times 10^3$)} & Time/step \\
 & & $k$ & $J$ & \\
\hline
Adaptive (sPCE) & $\mathbf{2.96 \pm 0.06}$ & $\mathbf{20.6}$ & $\mathbf{2.9}$ & $24\,\mu$s \\
Adaptive (BIM) & $2.56 \pm 0.06$ & $35.1$ & $4.6$ & $7.4\,$ms \\
\hline

\end{tabular}
\end{table}

\begin{figure}[t]
\centering
\includegraphics[width=\columnwidth]{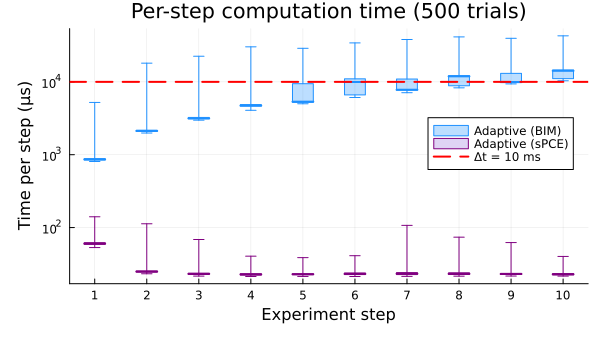}
\caption{Per-step computation time (500~rollouts, single-threaded, AMD Ryzen~9 5900X). Boxes show IQR; whiskers the 0.1--99.9th percentile. Adaptive BIM (blue) grows from ${\sim}1\,$ms to ${\sim}13\,$ms, exceeding the $\Delta t = 10\,$ms sampling interval (dashed red). The amortized sPCE policy (purple) evaluates in ${\sim}24\,\mu$s regardless of step.}
\label{fig:timing}
\end{figure}

\section{Conclusion}

We have shown that combining deep adaptive design with differentiable mechanistic models enables real-time adaptive model-based design of experiments.
The amortized policy outperforms both static baselines and adaptive online re-optimization across four systems of increasing complexity.

Current limitations remain: the policy is trained for a specific prior distribution, significant prior shifts require retraining, model misspecification is not addressed, stochasticity is limited to the measurement equation, and training requires GPU hours on a supercomputer, although deployment is lightweight.
Promising next steps are to combine the framework with Bayesian model selection~\citep{strouwen2026modelselection} for model uncertainty, extend the framework to process noise through Bayesian filtering~\citep{strouwen2023adaptive}, and deploy the learned policy on physical hardware with closed-loop feedback.
Together, these extensions would move the framework closer to robust real-world adaptive experimentation.

\section*{Acknowledgment}
The resources and services used in this work were provided by the VSC (Flemish Supercomputer Center), funded by the Research Foundation -- Flanders (FWO) and the Flemish Government.

\noindent\textit{Use of generative AI:}
Claude~\citep{anthropic2025claude} was used as a programming assistant for implementing the Julia codebase and as a writing aid for editing and improving manuscript text. All methodological decisions, experimental design, result interpretation, and scientific claims are the authors' own. The authors verified all outputs and take full responsibility for the content.

\bibliographystyle{plainnat}
\bibliography{references}

\ifextended
\newpage
\onecolumn
\appendices

\section{Targeted Inference with Nuisance Parameters}
\label{sec:nuisance}

We extend the sequential contrastive estimation framework to handle nuisance parameters—parameters that must be modeled but are not the focus of inference.
Since $p(h_K|\theta_T,\pi)$ is intractable, we estimate the two required marginals by Monte Carlo:
\begin{itemize}
    \item \textbf{Target marginal}: $M$ samples $\tilde{\theta}_N^{(1:M)} \sim p(\theta_N|\theta_T^{(0)})$ to estimate $p(h_K|\theta_T^{(0)},\pi)$
    \item \textbf{Full marginal}: $L$ contrastive samples $\theta^{(1)},\ldots,\theta^{(L)} \sim p(\theta)$ to estimate $p(h_K|\pi)$
\end{itemize}
Define
\begin{align}
\widehat p_M(h_K|\theta_T^{(0)},\pi)
&:= \frac{1}{M}\sum_{m=1}^{M} p(h_K|\theta_T^{(0)}, \tilde{\theta}_N^{(m)}, \pi), \\
\widehat p_L(h_K|\pi)
&:= \frac{1}{L+1}\sum_{\ell=0}^{L} p(h_K|\theta^{(\ell)}, \pi).
\end{align}

\begin{theorem}[Targeted sPCE Lower Bound]
\label{thm:targeted_spce}
Let $\theta^{(0)} = (\theta_T^{(0)}, \theta_N^{(0)}) \sim p(\theta)$ generate $h_K \sim p(h_K|\theta^{(0)}, \pi)$. Let $\theta^{(1)},\ldots,\theta^{(L)} \sim p(\theta)$ be independent contrastive samples, and let $\tilde{\theta}_N^{(1:M)} \sim p(\theta_N|\theta_T^{(0)})$ be nuisance samples. Define
\begin{equation}
\hat{\mathcal{L}}_K^{\text{tgt}}(\pi, L, M) = \E\left[\log \frac{\widehat p_M(h_K|\theta_T^{(0)},\pi)}{\widehat p_L(h_K|\pi)}\right],
\end{equation}
where the expectation is over $\theta^{(0)}$, the generated history $h_K$, the contrastive samples $\theta^{(1:L)}$, and the nuisance samples $\tilde{\theta}_N^{(1:M)}$.
Assume the relevant likelihoods are positive almost surely so that the logarithms are well defined, and that all displayed expectations are finite.
Then $\hat{\mathcal{L}}_K^{\text{tgt}}(\pi, L, M) \leq \mathcal{I}_K^{\text{tgt}}(\pi)$, with $\hat{\mathcal{L}}_K^{\text{tgt}} \to \mathcal{I}_K^{\text{tgt}}$ as $L, M \to \infty$.
\end{theorem}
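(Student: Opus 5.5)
The plan is to split the integrand into three pieces by inserting the two intractable marginals $p(h_K|\theta_T^{(0)},\pi)$ and $p(h_K|\pi)$:
\begin{equation*}
\log \frac{\widehat p_M}{\widehat p_L}
= \underbrace{\log \frac{\widehat p_M(h_K|\theta_T^{(0)},\pi)}{p(h_K|\theta_T^{(0)},\pi)}}_{A}
+ \underbrace{\log \frac{p(h_K|\theta_T^{(0)},\pi)}{p(h_K|\pi)}}_{B}
+ \underbrace{\log \frac{p(h_K|\pi)}{\widehat p_L(h_K|\pi)}}_{C}.
\end{equation*}
The expectation of $B$ is exactly $\mathcal{I}_K^{\text{tgt}}(\pi)$, by its definition. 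The inequality will therefore follow from $\E[A]\le 0$ and $\E[C]\le 0$. The finiteness assumptions justify splitting the expectation in this way.

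\textbf{Bounding $\E[A]$.} Condition on $(\theta^{(0)},h_K)$. The nuisance samples $\tilde\theta_N^{(1:M)}$ are drawn from $p(\theta_N|\theta_T^{(0)})$ independently of $h_K$ given $\theta_T^{(0)}$. Hence
\begin{equation*}
\E[\widehat p_M \mid \theta^{(0)},h_K] = \int p(h_K|\theta_T^{(0)},\theta_N,\pi)\,p(\theta_N|\theta_T^{(0)})\,d\theta_N = p(h_K|\theta_T^{(0)},\pi).
\end{equation*}
Jensen's inequality for the concave $\log$ then gives $\E[A\mid\theta^{(0)},h_K]\le 0$.

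\textbf{Bounding $\E[C]$.} This is the sPCE argument of \citet{foster2021}, which I would reproduce in the following form. The joint law of $(\theta^{(0:L)},h_K)$ is $p(\theta^{(0:L)})\,p(h_K|\theta^{(0)},\pi)$. The integrand $C$ is symmetric in $\theta^{(0)},\dots,\theta^{(L)}$. Its expectation is therefore unchanged if we replace the joint law by its symmetrization over which index generated the data, namely
\begin{equation*}
r(\theta^{(0:L)},h_K)=p(\theta^{(0:L)})\,\widehat p_L(h_K|\pi).
\end{equation*}
Under $r$ the marginal of $h_K$ is $p(h_K|\pi)$, and the conditional law of the parameters is $p(\theta^{(0:L)})\widehat p_L/p(h_K|\pi)$. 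Jensen's inequality under this conditional law gives
\begin{equation*}
\E_r[C\mid h_K]\le \log \int p(\theta^{(0:L)})\,d\theta^{(0:L)} = 0 .
\end{equation*}
Combining the three pieces yields $\hat{\mathcal L}_K^{\text{tgt}}\le \mathcal I_K^{\text{tgt}}$.

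\textbf{Convergence.} By the strong law of large numbers, $\widehat p_M\to p(h_K|\theta_T^{(0)},\pi)$ and $\widehat p_L\to p(h_K|\pi)$ almost surely, so $A\to 0$ and $C\to 0$ pointwise. The main obstacle is upgrading this to convergence of the expectations, since $\log$ is unbounded near $0$.

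For $C$, I would first use the deterministic bound $\widehat p_L\ge p(h_K|\theta^{(0)},\pi)/(L+1)$. I would also use the standard fact that $\E[C]$ is nondecreasing in $L$, which follows from an exchangeability/subsampling argument: $\widehat p_L$ is an average of $\widehat p_{L-1}$-type estimators over subsets. With $\E[C]\le 0$, monotonicity and Fatou's lemma then give $\E[C]\to 0$. The upper side is controlled by $\log p(h_K|\pi)-\log\widehat p_L \le \log p(h_K|\pi)-\log p(h_K|\theta^{(0)},\pi)+\log(L+1)$; I would use this, or alternatively uniform integrability from the finiteness assumptions.

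For $A$, the same subsampling argument shows $\E[A]$ is nondecreasing in $M$. Since $\E[A]\le 0$, Fatou's lemma applied to $-A$ (bounded below via $\widehat p_M$ being a positive average) should give $\E[A]\to 0$. If this step proves delicate, I would simply impose an integrability condition on $\log p(h_K|\theta^{(0)},\pi)$ under the nuisance prior.
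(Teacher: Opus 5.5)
Your lower-bound half is correct and matches the paper's proof. Your $A$ and $C$ are the negatives of its Term 1 and Term 2, the nuisance term gets the same conditional Jensen step, and your symmetrization under $r$ is the Foster et al.\ KL identity that the paper cites, written out.

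\textbf{The gap: convergence.} In the convergence half, Fatou's lemma is used on the wrong side for both terms.

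For $C_L=\log p(h_K|\pi)-\log\widehat p_L$ you already have $\E[C_L]\le 0$. What must be shown is $\liminf_L\E[C_L]\ge 0$. Fatou gives this only if $C_L\ge -Y$ for one integrable $Y$ independent of $L$, i.e.\ an integrable envelope for $\sup_L\log(\widehat p_L/p(h_K|\pi))$. Your bound $\widehat p_L\ge p(h_K|\theta^{(0)},\pi)/(L+1)$ controls the opposite side, and not even uniformly, since it costs $\log(L+1)$. Neither monotonicity nor termwise finiteness replaces the missing envelope. For example, $X_L=-L\,\mathbf{1}\{U<1/L\}$ with $U$ uniform has $X_L\to 0$ a.s.\ but $\E[X_L]=-1$ for all $L$.

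For $A_M$ the situation is mirrored. Fatou on $-A_M$ yields $\limsup_M\E[A_M]\le 0$, which Jensen already gave you. The needed $\liminf_M\E[A_M]\ge 0$ requires a lower envelope for $\log\widehat p_M$, exactly where $\log$ is unbounded, and positivity of $\widehat p_M$ provides none.

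A smaller point concerns monotonicity. Your subsampling argument is fine for $A$, where the nuisance draws are conditionally i.i.d.\ and concavity points the right way. For $C$, run on $\log\widehat p_L$ under the data-generating law, it fails: concavity gives the reverse inequality, and the leave-one-out estimators are not identically distributed because index $0$ is special. It works only after rewriting $-\E[C_L]=\E[\Phi(\widehat p_L/p(h_K|\pi))]$ with $\Phi(x)=x\log x$, under the product law $p(h_K|\pi)\prod_\ell p(\theta^{(\ell)})$.

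\textbf{How to close it.} The paper sidesteps all of this by adding, inside the proof, hypotheses that are not in the statement: uniform bounds $\kappa_1\le p(h_K|\theta,\pi)/p(h_K|\pi)\le\kappa_2$ and $\kappa_{1,N}\le p(h_K|\theta_T,\theta_N,\pi)/p(h_K|\theta_T,\pi)\le\kappa_{2,N}$. Since $\widehat p_L/p(h_K|\pi)$ and $\widehat p_M/p(h_K|\theta_T^{(0)},\pi)$ are averages of such ratios, both log-ratios are uniformly bounded, and the strong law plus bounded convergence finishes at once.

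To keep only the stated finiteness hypotheses, you must supply uniform integrability, which reverse martingales give.

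For $A$: conditionally on $(\theta_T^{(0)},h_K)$, the ratios $X_m=p(h_K|\theta_T^{(0)},\tilde\theta_N^{(m)},\pi)/p(h_K|\theta_T^{(0)},\pi)$ are i.i.d.\ with mean one, so their mean $\bar X_M$ is a reverse martingale. Conditional Jensen gives $A_M=\log\bar X_M\ge\E[\log X_1\mid\mathcal G_M]$, with $\mathcal G_M$ the exchangeable filtration. Hence $A_M^-$ is uniformly integrable once $\E|A_1|<\infty$, which is the $M=1$ case of the hypothesis and the correct form of your fallback. Also $A_M^+\le\bar X_M$ is uniformly integrable. The bound $\E[A_1\mid\cdot]\le\E[A_M\mid\cdot]\le 0$ then passes the limit through the outer expectation.

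For $C$: let $W_\ell=p(h_K|\theta^{(\ell)},\pi)/p(h_K|\pi)$ and let $\bar W_L$ be their mean. For a.e.\ fixed $h_K$ one has $-1/e\le\Phi(\bar W_L)\le\E[\Phi(W_0)\mid\mathcal G_L]$, so the inner expectation tends to $0$. These inner expectations are nonnegative and nonincreasing in $L$, and at $L=0$ they integrate to $I(\theta;h_K)$. That quantity is finite because $I(\theta;h_K)\le-\E[C_L]+\log(L+1)$, and this inequality is where your deterministic bound $\widehat p_L\ge p(h_K|\theta^{(0)},\pi)/(L+1)$ is actually useful.
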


\begin{proof}
The gap between the true objective and the practical bound is:
\begin{align}
\mathcal{I}_K^{\text{tgt}} - \hat{\mathcal{L}}_K^{\text{tgt}} &= \E\left[\log \frac{p(h_K|\theta_T^{(0)},\pi)}{p(h_K|\pi)} - \log \frac{\widehat p_M(h_K|\theta_T^{(0)},\pi)}{\widehat p_L(h_K|\pi)}\right] \\
&= \E\left[\log \frac{p(h_K|\theta_T^{(0)},\pi)}{\widehat p_M(h_K|\theta_T^{(0)},\pi)}\right] + \E\left[\log \frac{\widehat p_L(h_K|\pi)}{p(h_K|\pi)}\right].
\end{align}

\textbf{Term 1 (new nuisance-marginalization gap):} Condition on the realized $(\theta_T^{(0)}, h_K)$. Since $\tilde{\theta}_N^{(1:M)} \overset{\text{i.i.d.}}{\sim} p(\theta_N|\theta_T^{(0)})$,
\begin{equation}
\E\big[\widehat p_M(h_K|\theta_T^{(0)},\pi) \mid \theta_T^{(0)}, h_K\big]
= \int p(h_K|\theta_T^{(0)},\theta_N,\pi)\,p(\theta_N|\theta_T^{(0)})\,d\theta_N
= p(h_K|\theta_T^{(0)},\pi).
\end{equation}
Since $\log$ is concave, Jensen's inequality gives
\begin{equation}
\E\big[\log \widehat p_M(h_K|\theta_T^{(0)},\pi) \mid \theta_T^{(0)}, h_K\big]
\leq \log \E\big[\widehat p_M(h_K|\theta_T^{(0)},\pi) \mid \theta_T^{(0)}, h_K\big]
= \log p(h_K|\theta_T^{(0)},\pi),
\end{equation}
and hence
\begin{equation}
\E\left[\left.\log \frac{p(h_K|\theta_T^{(0)},\pi)}{\widehat p_M(h_K|\theta_T^{(0)},\pi)}\,\right|\,\theta_T^{(0)}, h_K\right] \geq 0.
\end{equation}
Taking expectations over $(\theta_T^{(0)}, h_K)$ yields the non-negativity of Term 1.

\textbf{Term 2 (standard sPCE gap):} This term is unchanged by the nuisance extension and is exactly the denominator gap analyzed by \citet{foster2021}. Reusing their change-of-measure and symmetry argument,
\begin{equation}
\E\left[\log \frac{\widehat p_L(h_K|\pi)}{p(h_K|\pi)}\right]
= \E_{p(h_K|\pi)}\left[\text{KL}\left(\tilde{p}(\theta^{(0:L)}|h_K) \,\|\, p(\theta^{(0:L)})\right)\right] \geq 0,
\end{equation}
where $\tilde{p}$ is the same mixture distribution as in the standard sPCE proof. Thus the second term is inherited directly from Foster et al.; the only new ingredient here is Term 1.

Since both terms are non-negative: $\hat{\mathcal{L}}_K^{\text{tgt}} \leq \mathcal{I}_K^{\text{tgt}}$.

\textbf{Convergence:} For the denominator term, we adopt the same bounded likelihood-ratio assumption used by \citet{foster2021}; as they note, it can be weakened, but we keep the simple uniform form here. For the new numerator term, we impose the analogous nuisance-ratio bound. Specifically, assume there exist constants $0<\kappa_{1,N} \le \kappa_{2,N}<\infty$ and $0<\kappa_1 \le \kappa_2<\infty$ such that
\begin{align}
\kappa_{1,N} \le \frac{p(h_K|\theta_T,\theta_N,\pi)}{p(h_K|\theta_T,\pi)} \le \kappa_{2,N} && \forall\,\theta_T,\theta_N,h_K, \\
\kappa_1 \le \frac{p(h_K|\theta,\pi)}{p(h_K|\pi)} \le \kappa_2 && \forall\,\theta,h_K,
\end{align}
Then both log-ratios are uniformly bounded:
\begin{align}
\left|\log \frac{p(h_K|\theta_T^{(0)},\pi)}{\widehat p_M(h_K|\theta_T^{(0)},\pi)}\right|
&\le \max\{ |\log \kappa_{1,N}|, |\log \kappa_{2,N}| \}, \\
\left|\log \frac{\widehat p_L(h_K|\pi)}{p(h_K|\pi)}\right|
&\le \max\{ |\log \kappa_1|, |\log \kappa_2| \}.
\end{align}
By the Strong Law of Large Numbers,
\begin{align}
\widehat p_M(h_K|\theta_T^{(0)},\pi) &\to p(h_K|\theta_T^{(0)},\pi) \quad \text{a.s. as } M\to\infty, \\
\widehat p_L(h_K|\pi) &\to p(h_K|\pi) \quad \text{a.s. as } L\to\infty.
\end{align}
The Bounded Convergence Theorem therefore gives Term 1 $\to 0$ and Term 2 $\to 0$, hence $\hat{\mathcal{L}}_K^{\text{tgt}} \to \mathcal{I}_K^{\text{tgt}}$.
\end{proof}

\section{Computational Budget Trade-offs}
\label{sec:budget_stub}

We briefly summarize how to allocate a fixed simulation budget across the outer minibatch size $B$, the number of contrastive samples $L$, and the number of nuisance samples $M$.
The goal is to approximate the gradient of the targeted expected information gain, $\nabla_\phi\,\mathcal{I}_K^{\text{tgt}}(\pi_\phi)$, as accurately as possible.
Let $\hat g_{B,L,M}(\phi) \in \mathbb{R}^{d_\phi}$ denote the resulting stochastic gradient estimator. For brevity, the bias, variance, and MSE rates below are stated for any fixed gradient coordinate.

\paragraph{Simulation budget (ODE rollouts).}
For mechanistic models, the dominant cost of evaluating $p(h_K|\theta,\pi)$ is simulating the system forward for $K$ design steps.
In our implementation (Algorithm~\ref{algo:DAD}), per episode we perform:
one rollout to generate $h_K$ under $\theta^{(0)}$,
$(L+1)$ rollouts for the denominator likelihoods under $\theta^{(0:L)}$, and
$M$ rollouts for the numerator likelihoods under $(\theta_T^{(0)}, \tilde{\theta}_N^{(m)})$ for $m=1,\ldots,M$.
The nuisance samples require separate ODE solves whenever nuisance parameters enter as initial conditions (e.g., initial biomass $C_{x,0}$) or appear in the dynamics (e.g., $\mu_{\max}$, $K_s$ in Haldane).
Thus, the compute budget is the total number of full-trajectory ODE rollouts per gradient update,
\begin{equation}
\label{eq:budget_rollouts}
\mathcal{C}_{\text{traj}} \;:=\; B\,(L+2+M),
\end{equation}
which is proportional to the number of ODE solver steps per update.

\paragraph{Asymptotic MSE of the gradient estimator.}
Let $g(\phi)=\nabla_\phi\,\mathcal{I}_K^{\text{tgt}}(\pi_\phi)$ denote the true targeted EIG gradient.
Let $\hat g_{B,L,M}(\phi)$ be the stochastic gradient used in training, obtained by differentiating the nested Monte Carlo objective in Theorem~\ref{thm:targeted_spce} and averaging over $B$ simulated episodes.
Under the regularity assumptions needed for the reparameterized gradient estimator and the second-order delta-method expansions used below---namely, interchange of differentiation and expectation, positivity of the Monte Carlo estimators almost surely, and sufficient moment/smoothness control (e.g., finite second and third moments for the relevant likelihood and gradient terms together with finite inverse moments of the Monte Carlo averages near their population means)---as $B,L,M\to\infty$ we have:

\emph{Bias.} At the objective level, Appendix~\ref{sec:nuisance} shows that the numerator term is a Jensen gap and the denominator term is the standard sPCE gap from Foster et al.~\citep{foster2021}. A second-order delta expansion of $\E[\log \widehat p_M\mid \theta_T^{(0)},h_K]$ around $p(h_K|\theta_T^{(0)},\pi)$ gives an $\mathcal{O}(M^{-1})$ numerator bias, while Foster et al. give the corresponding $\mathcal{O}((L+1)^{-1})$ denominator rate. Assuming these objective-level expansions may be differentiated with respect to $\phi$ under the expectation, the gradient bias satisfies:
\begin{equation}
\mathbb{E}[\hat g_{B,L,M}(\phi)] - g(\phi) = \mathcal{O}((L+1)^{-1}) + \mathcal{O}(M^{-1}).
\end{equation}

\emph{Variance.} The estimator averages $B$ independent episodes, so $\mathrm{Var}(\hat g) = \mathrm{Var}(\hat g^{(1)})/B$. The single-episode variance decomposes via the law of total variance into an outer term (from the ground truth and observation noise) and an inner term (from the contrastive and nuisance samples). Conditional on the outer randomness, the numerator contribution depends only on $\tilde{\theta}_N^{(1:M)}$ and the denominator contribution only on $\theta^{(1:L)}$, so their conditional covariance is zero. Moreover, $\nabla_\phi \log \widehat p_M$ is a smooth function of the pair $(\widehat p_M, \nabla_\phi \widehat p_M)$, and similarly for $\nabla_\phi \log \widehat p_L$. A multivariate delta method applied to these pairs gives conditional inner-variance contributions of orders $\mathcal{O}(M^{-1})$ and $\mathcal{O}((L+1)^{-1})$, respectively. Therefore:
\begin{equation}
\mathrm{Var}(\hat g_{B,L,M}(\phi)) = \mathcal{O}(B^{-1}) + \mathcal{O}\big((B(L+1))^{-1}\big) + \mathcal{O}((BM)^{-1}).
\end{equation}

\emph{MSE.} Since $\mathrm{MSE} = \mathrm{Bias}^2 + \mathrm{Var}$ and the cross term in $\mathrm{Bias}^2$ is bounded by the sum of the squared terms (by AM-GM):
\begin{equation}
\label{eq:grad_mse_scaling}
\mathrm{MSE}(\hat g_{B,L,M})
= \mathcal{O}(B^{-1})
+ \mathcal{O}\big((B(L+1))^{-1}\big)
+ \mathcal{O}((BM)^{-1})
+ \mathcal{O}((L+1)^{-2})
+ \mathcal{O}(M^{-2}).
\end{equation}

\paragraph{Optimal scaling under fixed ODE budget.}
Since all three variables $L$, $M$, $B$ consume ODE rollouts, we jointly optimize them under the budget constraint~\eqref{eq:budget_rollouts}.
Substituting $B = \mathcal{C}_{\text{traj}}/(L+2+M)$ into the leading terms of~\eqref{eq:grad_mse_scaling} gives the proxy
\begin{equation}
\mathrm{MSE}(L,M) \;\approx\; \frac{a\,(L+2+M)}{\mathcal{C}_{\text{traj}}} + \frac{c}{(L+1)^2} + \frac{d}{M^2},
\end{equation}
for problem-dependent constants $a,c,d>0$.
This reduction keeps only the leading terms. It is self-consistent: balancing the retained terms yields $L,M \propto \mathcal{C}_{\text{traj}}^{1/3}$ and $B \propto \mathcal{C}_{\text{traj}}^{2/3}$, under which the omitted inner-variance terms $\mathcal{O}((B(L+1))^{-1})$ and $\mathcal{O}((BM)^{-1})$ are both $\mathcal{O}(\mathcal{C}_{\text{traj}}^{-1})$, whereas the retained terms are $\mathcal{O}(\mathcal{C}_{\text{traj}}^{-2/3})$.
Setting $\partial\,\mathrm{MSE}/\partial L = 0$ and $\partial\,\mathrm{MSE}/\partial M = 0$ yields
\begin{equation}
L^\star + 1 = \left(\frac{2c\,\mathcal{C}_{\text{traj}}}{a}\right)^{1/3}, \qquad
M^\star = \left(\frac{2d\,\mathcal{C}_{\text{traj}}}{a}\right)^{1/3}, \qquad
B^\star = \frac{\mathcal{C}_{\text{traj}}}{L^\star + 2 + M^\star}.
\end{equation}
All three scale with $\mathcal{C}_{\text{traj}}$: $L^\star, M^\star \propto \mathcal{C}_{\text{traj}}^{1/3}$ and $B^\star \propto \mathcal{C}_{\text{traj}}^{2/3}$.
Asymptotically, the ratio $(L^\star+1)/M^\star = (c/d)^{1/3}$ depends on the relative magnitudes of the two Jensen gaps.
Equivalently, $B^\star \propto (L^\star+1)^2$ and $B^\star \propto (M^\star)^2$.
When the constants $c,d$ are unknown, a natural default is $c=d$, which gives $L^\star + 1 \approx M^\star$.

\paragraph{Gradient accumulation.}
In practice, GPU memory limits the number of concurrent ODE rollouts.
Gradient accumulation splits the $B$ episodes into $G$ micro-batches of size $B_{\text{micro}} = B/G$ and sums the resulting gradients, so at fixed $L$, $M$, and $B$ it leaves the estimator unchanged. Its role is simply to satisfy the memory constraint
\begin{equation}
(L + M) B_{\text{micro}} \lesssim \texttt{GPU\_MEM}.
\end{equation}
If memory fixes $B_{\text{micro}}$, then increasing $G$ increases the feasible total budget, $\mathcal{C}_{\text{traj}} = G B_{\text{micro}} (L+2+M)$. Re-optimizing $L$, $M$, and $B$ under this larger budget reduces the Jensen-gap bias, at the cost of roughly $G\times$ wall time per update.

\section{Policy Equivalence}
\label{sec:policy_equiv}

We show that the nested adaptive optimization criterion from Section~\ref{sec:criterion} is equivalent to optimizing over policies. Consider $K$ sequential input choices $u_1,\ldots,u_K$ with outcomes $Y_1,\ldots,Y_K$ and utility $U(u_{1:K},Y_{1:K})$. The nested adaptive objective is
\begin{equation}
V = \max_{u_1} \E_{Y_1|u_1}\Big[\max_{u_2} \E_{Y_2|Y_1,u_{1:2}}\Big[\cdots \max_{u_K} \E_{Y_K|Y_{1:K-1},u_{1:K}}[U]\cdots\Big]\Big].
\end{equation}
A deterministic non-anticipative policy $\pi = (\pi_1,\ldots,\pi_K)$ maps histories to inputs: $\pi_1 \in \mathcal{U}_1$ and $\pi_k: \mathcal{H}_{k-1} \to \mathcal{U}_k$ for $k \geq 2$, where $\mathcal{H}_{k-1}$ is the space of histories $(u_1,Y_1,\ldots,u_{k-1},Y_{k-1})$. The policy value is $J(\pi) = \E[U(\pi_1, \pi_2(h_1), \ldots, \pi_K(h_{K-1}), Y_{1:K})]$, with $\Pi$ the set of all such policies.

\begin{theorem}[Policy equivalence]
\label{thm:policy_equiv}
$V = \max_{\pi \in \Pi} J(\pi)$.
\end{theorem}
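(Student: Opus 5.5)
The plan is to prove the equivalence by backward induction (dynamic programming), defining value functions on histories and showing two inequalities. For $k=K,\dots,1$ define
\begin{equation*}
W_K(h_{K-1},u_K) = \E_{Y_K|h_{K-1},u_K}[U], \qquad V_k(h_{k-1}) = \max_{u_k} W_k(h_{k-1},u_k),
\end{equation*}
\begin{equation*}
W_{k-1}(h_{k-2},u_{k-1}) = \E_{Y_{k-1}|h_{k-2},u_{k-1}}\big[V_k(h_{k-2},u_{k-1},Y_{k-1})\big].
\end{equation*}
By construction $V = V_1(\varnothing)$; this is just a rewriting of the nested objective. I will assume what is implicit in writing ``$\max$'': the inner maxima are attained, and a measurable selection $\pi_k^\star(h_{k-1}) \in \argmax_{u_k} W_k(h_{k-1},u_k)$ exists, for instance because the design sets $\mathcal{U}_k$ are compact (as in the examples, e.g.\ $V_{in}\in[0,10]$) and $W_k$ is continuous in $u_k$. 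I will also assume all expectations are finite.

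\textbf{Step 1 ($J(\pi)\le V$ for every $\pi\in\Pi$).} Fix an arbitrary policy and define its history-dependent value $J_k^\pi(h_{k-1})$ as the conditional expectation of $U$ given $h_{k-1}$ when inputs $u_k,\dots,u_K$ follow $\pi$. The tower property gives the recursion $J_k^\pi(h_{k-1}) = W$-type expressions evaluated at $u_k=\pi_k(h_{k-1})$ with $V_{k+1}$ replaced by $J_{k+1}^\pi$. Backward induction from $k=K$ then shows $J_k^\pi(h_{k-1}) \le V_k(h_{k-1})$ for all histories. The inductive step uses monotonicity of conditional expectation, $J^\pi_{k+1}\le V_{k+1}$ inside $\E_{Y_k}$, followed by the inequality ``value at $\pi_k(h_{k-1})$ $\le$ max over $u_k$''. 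At $k=1$ this yields $J(\pi)=J_1^\pi \le V_1 = V$.

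\textbf{Step 2 (attainment).} Take the greedy-in-value policy $\pi^\star=(\pi_1^\star,\dots,\pi_K^\star)$ from the measurable selection. The same backward induction, now with equalities at each step because $\pi_k^\star$ attains the max, gives $J_k^{\pi^\star}=V_k$, so $J(\pi^\star)=V$. Combined with Step 1, this shows $\max_{\pi\in\Pi}J(\pi)$ exists and equals $V$.

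\textbf{Main obstacle.} The algebra is routine; the delicate part is measure-theoretic. First, $V_{k+1}$ must be measurable in the history so that the outer conditional expectation makes sense. Second, a measurable maximizer must exist, so that $\pi^\star$ is a legitimate element of $\Pi$. I would handle this by invoking a measurable selection theorem (Kuratowski--Ryll-Nardzewski, or the standard result for Carathéodory functions on compact sets) under continuity and compactness assumptions, stated explicitly as the standing hypotheses under which the nested $\max$ is well defined. If attainment fails, the same argument goes through with $\sup$ in place of $\max$, using $\varepsilon$-optimal selectors.
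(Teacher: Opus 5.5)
Your proposal is correct and follows the paper's proof essentially step for step: the paper also defines continuation values $G_k$ by backward induction with $G_1 = V$, shows $V_k^\pi \le G_k$ for an arbitrary policy by monotonicity of conditional expectation followed by ``value at the chosen action $\le$ max'', and then gets equality by induction for the argmax policy $\pi^\star$. Your explicit attainment and measurable-selection hypotheses are a genuine addition, since the paper silently assumes that the argmax exists and yields an admissible policy in $\Pi$.
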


\begin{proof}
Define continuation values by backward induction:
\begin{align}
G_K(h_{K-1}) &= \max_{a_K \in \mathcal{U}_K} \E[U(u_{1:K-1}, a_K, Y_{1:K}) \mid h_{K-1}, a_K], \\
G_k(h_{k-1}) &= \max_{a_k \in \mathcal{U}_k} \E[G_{k+1}(h_{k-1}, a_k, Y_k) \mid h_{k-1}, a_k],
\end{align}
for $k = K{-}1, \ldots, 1$ (with $h_0 = \varnothing$). By construction, $G_1 = V$.

\emph{Part 1: $\max_\pi J(\pi) \leq V$.}
Fix any $\pi$ and let $V_k^\pi(h_{k-1})$ denote the expected utility under $\pi$ from stage $k$ onward. We show $V_k^\pi(h_{k-1}) \leq G_k(h_{k-1})$ for all $k$ by backward induction.

Base case ($k=K$): $\pi_K(h_{K-1})$ is one feasible action, so
\begin{equation}
V_K^\pi(h_{K-1}) = \E[U \mid h_{K-1}, \pi_K(h_{K-1})] \leq \max_{a_K \in \mathcal{U}_K} \E[U \mid h_{K-1}, a_K] = G_K(h_{K-1}).
\end{equation}
Inductive step: assume $V_{k+1}^\pi(h_k) \leq G_{k+1}(h_k)$ for all $h_k$. Then
\begin{align}
V_k^\pi(h_{k-1}) &= \E[V_{k+1}^\pi(h_k) \mid h_{k-1}, \pi_k(h_{k-1})] \\
&\leq \E[G_{k+1}(h_k) \mid h_{k-1}, \pi_k(h_{k-1})] \\
&\leq \max_{a_k \in \mathcal{U}_k} \E[G_{k+1}(h_k) \mid h_{k-1}, a_k] = G_k(h_{k-1}),
\end{align}
where the first inequality uses the induction hypothesis and the second uses the definition of $G_k$. Setting $k=1$ gives $J(\pi) = V_1^\pi \leq G_1 = V$.

\emph{Part 2: $V \leq \max_\pi J(\pi)$.}
Construct $\pi^\star$ by choosing argmax actions at each stage:
\begin{equation}
\pi_k^\star(h_{k-1}) = \argmax_{a_k \in \mathcal{U}_k} \E[G_{k+1}(h_{k-1},a_k,Y_k) \mid h_{k-1},a_k].
\end{equation}
We show $V_k^{\pi^\star}(h_{k-1}) = G_k(h_{k-1})$ for all $k$ by backward induction.

Base case ($k=K$): $\pi_K^\star$ achieves the argmax, so $V_K^{\pi^\star}(h_{K-1}) = G_K(h_{K-1})$.

Inductive step: assume $V_{k+1}^{\pi^\star}(h_k) = G_{k+1}(h_k)$ for all $h_k$. Then
\begin{align}
V_k^{\pi^\star}(h_{k-1}) &= \E[V_{k+1}^{\pi^\star}(h_k) \mid h_{k-1}, \pi_k^\star(h_{k-1})] \\
&= \E[G_{k+1}(h_k) \mid h_{k-1}, \pi_k^\star(h_{k-1})] \\
&= G_k(h_{k-1}),
\end{align}
where the first equality uses the induction hypothesis and the second uses the definition of $\pi_k^\star$. Setting $k=1$ gives $J(\pi^\star) = V_1^{\pi^\star} = G_1 = V$.
\end{proof}

\section{Bayesian Information Matrix Baseline}
\label{sec:bim}

\paragraph{Static BIM baseline.}
We use a Bayesian D-optimal design criterion based on the Fisher information matrix (FIM) as a baseline. Here $\theta$ denotes only the parameters that enter the dynamics and initial conditions---not the noise parameters. For a static input sequence $u_{1:K}$ and known observation noise covariance $\Sigma$, the FIM is
\begin{equation}
F(\theta, \Sigma, u_{1:K}) = J(\theta, u_{1:K})^\top \Sigma^{-1} J(\theta, u_{1:K}),
\end{equation}
where $J \in \mathbb{R}^{K \times q}$ is the sensitivity matrix with entries $J_{k,j} = \partial g(x(t_k)) / \partial \theta_j$, computed by differentiating the ODE solution. For the Monod bioreactor, $\Sigma = \sigma^2 I$ with $\sigma$ drawn from its prior.

When nuisance parameters are present in $\theta$, we project the information onto the target parameters via the Schur complement. Partitioning the posterior precision $P = F + \Lambda$ (where $\Lambda$ is the prior precision, with $\lambda_j = 12/(\theta_j^{\max} - \theta_j^{\min})^2$ for independent uniform priors) as
\begin{equation}
P = \begin{pmatrix} P_{TT} & P_{TN} \\ P_{NT} & P_{NN} \end{pmatrix},
\end{equation}
the marginal posterior precision for $\theta_T$ is
\begin{equation}
P_T = P_{TT} - P_{TN} P_{NN}^{-1} P_{NT}.
\end{equation}

The Bayesian D-optimal design maximizes the expected log-determinant:
\begin{equation}
u^\star_{\text{BIM}} = \argmax_{u_{1:K}} \; \E_{p(\theta, \Sigma)}\!\left[\log\det P_T(\theta, \Sigma, u_{1:K})\right],
\end{equation}
where the expectation averages over both the dynamics parameters and the noise covariance, approximated by prior samples. This is a standard criterion~\citep{chaloner1995} that assumes Gaussian posteriors (via the Laplace approximation) and does not account for sequential adaptation.
This formulation requires $\Sigma$ to be independent of $\theta$, which precludes our BIM implementation for the pharmacokinetic example, where the noise scales are nuisance parameters that interact with the dynamics parameters in the targeted sPCE objective.
We use the static BIM as a baseline for the Monod bioreactor (Table~\ref{tab:spce}) and the adaptive variant below for the DC motor (Table~\ref{tab:motor}).

\paragraph{Adaptive BIM baseline.}
The adaptive BIM strategy in Table~\ref{tab:motor} extends the static BIM criterion to a sequential, myopic setting. At step~$k$, given the history $(u_{1:k-1}, y_{1:k-1})$:
\begin{enumerate}
\item Compute a MAP estimate $\hat{\theta}_k = \argmax_\theta\, p(\theta \mid y_{1:k-1}, u_{1:k-1})$ via Adam optimization with ForwardDiff gradients (100 iterations, warm-started from $\hat{\theta}_{k-1}$; at $k=1$, initialized at prior midpoints).
\item Select $u_k = \argmax_{u} \log\det P_T(\hat{\theta}_k, [u_{1:k-1}, u])$ by grid search over 100 candidate voltages in $[0, 10]\,$V.
\item Execute $u_k$ on the true system and observe $y_k$.
\end{enumerate}
This replaces the prior expectation $\E_{p(\theta)}[\cdot]$ with a point evaluation at $\hat{\theta}_k$ (a Dirac delta approximation consistent with the Laplace assumption that BIM already makes). The MAP estimate is the natural choice: under the Laplace approximation, MAP coincides with the posterior mean. The strategy is myopic, optimizing one step ahead only.

The key computational difference is that adaptive BIM requires online posterior inference (MAP) and design re-optimization at \emph{each} experimental step, whereas the amortized sPCE policy requires only a single neural network forward pass. This makes adaptive BIM infeasible for real-time applications like the DC motor ($\Delta t = 10\,$ms), where the MAP + grid search takes milliseconds per step and exceeds the sampling interval by the later steps.

\section{Training and Evaluation Details}
\label{sec:training_details}

Training is implemented in Julia~\citep{bezanson2017julia} using Lux.jl~\citep{pal2023lux} for the neural network, Enzyme~\citep{moses2020enzyme} for automatic differentiation through the ODE solver, and Reactant~\citep{reactant2025} for GPU-accelerated compilation via MLIR/XLA. Training was performed on the VSC (Flemish Supercomputer Center) using NVIDIA H100 GPUs.

\paragraph{Architecture.}
The main examples share the same transformer architecture. Each design--observation pair $(u_k, y_k)$ is projected from $\mathbb{R}^2$ to $\mathbb{R}^{32}$ by a dense layer, then summed with 32-dimensional sinusoidal positional encodings. The transformer block consists of RMSNorm, causal multi-head attention (4~heads, dimension~32), and a feed-forward network ($32 \to 64$, GELU, $64 \to 32$). The output head is a dense layer $32 \to 1$ projecting the final hidden state to the design input, with a per-example initial bias (Table~\ref{tab:training_config}). Since the output is passed through a sigmoid, a negative bias initializes the untrained policy to produce small actions, which is more conservative for slow systems like bioreactors.

\paragraph{Optimizer.}
All examples use Adam with a cosine learning rate schedule: linear warmup from $10^{-5}$ to $3 \times 10^{-3}$ over 50~iterations, then cosine annealing back to $10^{-5}$.

\paragraph{Budget allocation.}
For each example, $L$, $M$, gradient accumulation $G$, and micro-batch size $B_{\text{micro}}$ were chosen using the heuristic in Appendix~\ref{sec:budget_stub}, subject to practical memory constraints. Gradient accumulation over $G$ micro-batches yields an effective batch size $B = G \times B_{\text{micro}}$. Table~\ref{tab:training_config} reports the settings used in the reported runs.

\paragraph{ODE integration.}
The dynamics are integrated with a fixed-step RK4 solver, using $N_{\text{sub}}$ substeps per design interval to ensure numerical stability.

A representative training loss curve is shown in Figure~\ref{fig:training_loss}.

\begin{table}[t]
\centering
\caption{Per-example training settings used in the reported runs. The main examples share the architecture and optimizer described above.}
\label{tab:training_config}
\small
\begin{tabular}{@{}lcccc@{}}
\hline
 & Monod & Haldane & Weibull & Motor \\
\hline
Iterations           & 1000 & 1000  & 1000  & 1000 \\
$\mathcal{C}_{\text{traj}}$ & 6.25M & 2.12M & 24M & 56M \\
$G$ (accum.\ steps)  & 8    & 16    & 8     & 8 \\
$L$ (contrastive)    & 464  & 161   & 725   & 960 \\
$M$ (nuisance)       & 466  & 162   & 727   & 962 \\
$B_{\text{micro}}$   & 6706 & 408   & 16506 & 29106 \\
$N_{\text{sub}}$ (RK4) & 50 & 50    & 10    & 10 \\
Output bias          & $-4.0$ & $-1.0$ & $-2.0$ & $0.0$ \\
\hline
\end{tabular}
\end{table}

\begin{figure}[t]
\centering
\includegraphics[width=0.7\columnwidth]{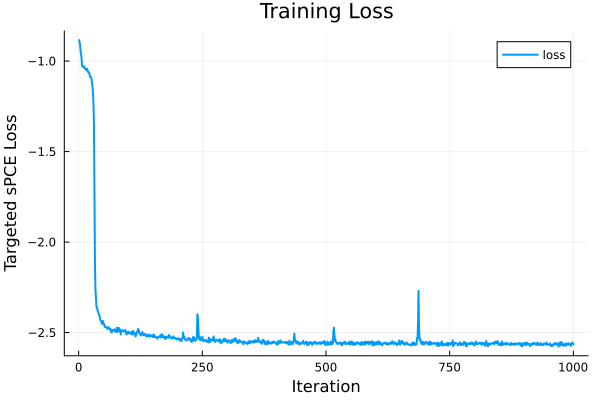}
\caption{Training loss curve (targeted sPCE objective) over iterations for the Monod bioreactor. The objective converges within approximately 1000 iterations.}
\label{fig:training_loss}
\end{figure}

\section{Architecture Ablation}
\label{sec:ablation}

We ablate the transformer-based policy on the Weibull PK model (Section~V) to isolate the contributions of attention and positional encoding. Three architectures are compared, all trained for 200 iterations with the same training setup:
\begin{itemize}
\item \textbf{Full Transformer}: the proposed architecture (attention + sinusoidal PE).
\item \textbf{Flat MLP}: a feedforward network that concatenates the full observation history into a single vector ($48 \to 64 \to 64 \to 1$; 7{,}361 parameters vs.\ 6{,}400 for the transformer).
\item \textbf{Transformer (no PE)}: the same transformer with positional encoding removed.
\end{itemize}

Evaluation uses 1{,}000 paired trials ($L{=}M{=}5{,}000$, $B{=}32$): all three models share the same sampled parameters and noise per trial, so differences are due solely to the architecture. Table~\ref{tab:ablation} reports the results.

\begin{table}[t]
\centering
\caption{Architecture ablation on the Weibull PK model (1{,}000 paired trials, $L{=}M{=}5{,}000$). Higher sPCE is better.}
\label{tab:ablation}
\small
\begin{tabular}{@{}lcc@{}}
\hline
Architecture & sPCE (mean $\pm$ SEM) & Paired $t$ vs.\ Full \\
\hline
Full Transformer      & $1.709 \pm 0.031$ & --- \\
Flat MLP              & $1.679 \pm 0.031$ & $t = 2.24$ ($p < 0.05$) \\
Transformer (no PE)   & $1.364 \pm 0.030$ & $t = 12.09$ ($p < 0.001$) \\
\hline
\end{tabular}
\end{table}

Positional encoding is the dominant factor: without it, the transformer cannot distinguish \emph{when} observations were collected, losing 0.35~nats of expected information. The attention mechanism provides a smaller but statistically significant benefit over the flat MLP (0.03~nats), suggesting that selectively weighting the observation history helps but is less critical than temporal awareness.

\fi

\end{document}